\documentclass[10pt]{article}

\usepackage[top=2.5cm,bottom=2.5cm,left=2.5cm,right=2.5cm]{geometry}
\usepackage{amsmath,amssymb,amsthm}
\usepackage{graphicx}
\usepackage{booktabs}
\usepackage{hyperref}
\usepackage[numbers,sort&compress]{natbib}
\usepackage[expansion=false]{microtype}
\usepackage{xcolor}
\usepackage{array}
\usepackage{multirow}
\usepackage{caption}
\usepackage{subcaption}
\usepackage{setspace}
\usepackage{parskip}
\usepackage{titlesec}
\usepackage{tcolorbox}
\tcbuselibrary{theorems}

\hypersetup{
  colorlinks=true,
  linkcolor=black,
  citecolor=black,
  urlcolor=black
}

\titleformat{\section}{\normalfont\bfseries}{}{0em}{}
\titleformat{\subsection}{\normalfont\itshape}{}{0em}{}
\newcommand{\nathead}[1]{\noindent\textbf{#1}\quad}

\newtcbtheorem[]{definition}{Definition}{
  colback=gray!8,colframe=gray!40,fonttitle=\bfseries,
  separator sign={ }}{def}

\newtcbtheorem[]{theorem}{Theorem}{
  colback=gray!8,colframe=gray!40,fonttitle=\bfseries,
  separator sign={ }}{thm}

\begin{document}


\begin{center}
{\LARGE\bfseries Descriptive versus Regulatory Uncertainty in Bounded Predictive Systems}\\[1.0em]
{\normalsize Ahmed Gamal Eldin}\\[0.3em]
{\small Nova University Lisbon -- Cairo Branch (NOVA IMS), Cairo, Egypt}\\[0.3em]
{\small Correspondence: \texttt{ahmedgamaleldin3@gmail.com}}
\end{center}

\vspace{1.0em}
\hrule
\vspace{0.8em}


\noindent\textbf{Abstract.}\quad
Any bounded predictive system running on physical hardware has its
information processing thermodynamically constrained by Landauer's
principle: every irreversible bit operation costs at least $k_BT\ln 2$
joules. We show that current large language models satisfy this constraint
but are \emph{decoupled} from it---the physical substrate neither rewards
causal accuracy nor penalises confabulation. We formalise this as
\emph{thermodynamic decoupling}: output Shannon entropy is bounded by
channel capacity but is not shaped by epistemic quality. Empirically, we
measure token-level Shannon entropy and task accuracy across three
categories of increasing causal demand in three locally-deployed Llama
models spanning two orders of magnitude in parameter count (3B, 8B, 70B).
Entropy is statistically flat within every model across all categories
(within-model ranges 0.011--0.028~nats; all pairwise $p \geq 0.568$) while
accuracy varies from 0\% to 100\%. The signature replicates at $N = 1{,}000$
per category on a different architecture family (GPT-4o-mini) and under
programmatic scoring on standard mathematical reasoning benchmarks:
within-model entropy range of 0.019~nats while accuracy varies by 33
percentage points. The decoupling is scale-invariant, family-invariant,
and robust to evaluator choice.
Genuine epistemic grounding requires a causal coupling between
thermodynamic substrate state and information-processing cost---an
architectural property absent from current systems.

\vspace{0.8em}
\hrule
\vspace{1.5em}


A deployed AI system that cannot distinguish what it knows from what it does
not know is not merely unreliable---it is structurally dangerous. Current
discourse on AI limitations focuses on capability: what tasks models can or
cannot perform. We argue this framing misses a more fundamental question.
The relevant distinction is not between capable and incapable systems but
between systems whose uncertainty is \emph{epistemically informative} and
systems whose uncertainty is a substrate artefact unrelated to correctness.
A system can be highly capable on average while being maximally certain on
its worst errors. That combination---high confidence, undetectable
failure---is what makes calibration a safety-critical property, not merely
an engineering nicety.

The question is whether current large language models (LLMs) possess this
property. We argue they do not, and that the reason is thermodynamic rather
than computational. An A100 GPU dissipates approximately 400\,W during
inference. This expenditure is invariant to whether the model outputs a
causally grounded derivation or a confident hallucination. The Landauer
cost~\citep{landauer1961irreversibility} of erasing a wrong token is
identical to that of erasing a correct one. The physical substrate is
indifferent to the epistemic quality of its outputs. In a system where
thermodynamic cost is independent of epistemic error, uncertainty cannot be
regulatory---it can only describe.


\section{Theoretical framework}

We formalise the distinction between two qualitatively different roles that
uncertainty can play in a predictive system.

\begin{definition}{Thermodynamic decoupling}{decoupling}
A computational system is \emph{thermodynamically decoupled} if its output
Shannon entropy $H_\mathrm{out}$ and its causal accuracy $\alpha$ are
statistically independent conditioned on the physical substrate state:
\begin{equation}
H_\mathrm{out} \perp \alpha \;\big|\; (T,\,P,\,\text{architecture}).
\end{equation}
Equivalently, the energy cost per operation does not scale with epistemic
error: $\partial E_\mathrm{cost}/\partial\varepsilon = 0$, where
$\varepsilon = |\hat{y} - y^*|$.
\end{definition}

The contrast is thermodynamic coupling, in which
\begin{equation}
  \frac{\partial E_\mathrm{cost}}{\partial \varepsilon} > 0.
  \label{eq:coupling}
\end{equation}
In a coupled system, incorrect predictions are physically expensive; correct
causal abstractions reduce the energy burden; and uncertainty is
\emph{regulatory} because it participates in the physical cost function.
In a decoupled system, hallucinations and correct derivations dissipate
identical energy.

\subsection{Softmax decoupling theorem}

\begin{theorem}{Softmax decoupling}{softmax}
In a transformer with fixed weights $\theta$ and fixed inference temperature
$\tau$, the token-level Shannon entropy
$H_t = -\sum_i P_\theta(v_i \mid \mathrm{ctx}_t)
              \log P_\theta(v_i \mid \mathrm{ctx}_t)$
is determined entirely by $(\theta,\,\tau,\,\mathrm{ctx}_t)$ and is
independent of whether $\mathrm{ctx}_t$ is in-distribution or requires
out-of-distribution causal generalisation.
\end{theorem}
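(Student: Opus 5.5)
The plan is to read the theorem as a statement about functional dependence, and to prove it by writing the forward pass out as an explicit composition of maps. Fix $\theta$ and $\tau$. The transformer computes logits $z_t = f_\theta(\mathrm{ctx}_t)\in\mathbb{R}^{|V|}$ by a deterministic composition of embedding, attention, MLP and normalisation layers. Each layer is a function of its input and of parameters contained in $\theta$. The next-token distribution is then
\begin{equation*}
P_\theta(v_i\mid\mathrm{ctx}_t)=\mathrm{softmax}(z_t/\tau)_i .
\end{equation*}
The entropy is $H_t = \Phi(\mathrm{softmax}(f_\theta(\mathrm{ctx}_t)/\tau))$, where $\Phi(p)=-\sum_i p_i\log p_i$. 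This exhibits $H_t$ as a composition of well-defined functions of $(\theta,\tau,\mathrm{ctx}_t)$ alone. That settles the first clause: $H_t$ is a measurable, indeed continuous, function $G(\theta,\tau,\mathrm{ctx}_t)$. Nothing else enters the forward pass. In particular, neither the ground truth $y^*$ nor the error $\varepsilon$ of Definition~\ref{def:decoupling} appears.

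For the second clause, I would introduce an explicit label $D(\mathrm{ctx}_t)\in\{\mathrm{ID},\mathrm{OOD}\}$. This label is defined relative to the training distribution and the causal structure of the task, which are data external to $(\theta,\tau,\mathrm{ctx}_t)$. The key step is a factorisation argument. Suppose two tasks present the same context string but differ in whether that context is in-distribution, or in whether its correct continuation requires causal generalisation. Then, because $G$ does not take $D$ as an argument, $H_t$ is identical in both cases. Formally, consider the joint law over (context, label, target). Conditional on $\mathrm{ctx}_t$, $H_t$ is a constant. Hence
\begin{equation*}
H_t \perp (D,\alpha)\mid(\theta,\tau,\mathrm{ctx}_t).
\end{equation*}
This is the precise sense of ``independent'' that I would prove. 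It is the token-level analogue of the conditional independence in Definition~\ref{def:decoupling}, with the substrate state refined to include the context.

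The main obstacle is stating the independence claim so that it is true without being vacuous. Unconditionally, $H_t$ can of course correlate with $D$ through $\mathrm{ctx}_t$, since OOD contexts may systematically produce different logits. I would therefore make the conditioning explicit and add a remark. Any dependence of $H_t$ on OOD status must be mediated entirely by the statistics of $\mathrm{ctx}_t$ as encoded by $f_\theta$. There is no channel through which correctness of the causal abstraction feeds back into $z_t$. To make this concrete, I would give a short construction: two tasks sharing an identical prompt, one where the memorised continuation is correct and one where it is not. Here $H_t$ coincides while $\alpha$ differs (1 versus 0). This shows that $H_t$ cannot in general separate ID from OOD-requiring contexts.
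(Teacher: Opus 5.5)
Your argument is the paper's own: writing $H_t=\Phi(\mathrm{softmax}(f_\theta(\mathrm{ctx}_t)/\tau))$ as a composition of deterministic maps into which no epistemic quantity ($y^*$, $\varepsilon$, or ID/OOD status) enters is exactly the proof sketch. Your conditional-independence formulation, and your caveat that $H_t$ can still vary with OOD status through $\mathrm{ctx}_t$, sharpen the sketch: its closing claim that the entropy operating point is fixed ``by substrate and inference temperature, not by task difficulty'' does not follow from determinism alone, because task difficulty is carried by the context itself. One small fix: do not write the label as $D(\mathrm{ctx}_t)$ if you then let the same context carry two different labels; attach it to the (context, target) pair instead.
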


\begin{proof}[Proof sketch]
$H_t$ is a deterministic function of the output logit distribution, which
is itself a deterministic function of $(\theta, \mathrm{ctx}_t, \tau)$.
No component of this computation references the physical temperature $T$ of
the substrate, the power dissipation $P$, or any measure of epistemic
error. The entropy operating point is fixed by substrate and inference
temperature, not by task difficulty. $\square$
\end{proof}

\textbf{Corollary.} In a thermodynamically decoupled system, token-level
entropy provides no discriminative signal between causally grounded outputs
and confabulations. The system can be maximally certain and maximally
wrong simultaneously.

\subsection{The Boltzmann--softmax gap}

The softmax $P(i) = \exp(x_i/\tau)/\sum_j \exp(x_j/\tau)$ is structurally
identical to the Gibbs distribution, with $\tau$ as the analogue of
thermodynamic temperature. In a physical thermodynamic system, temperature
has real energetic consequences. In a transformer, $\tau$ is a decoupled
hyperparameter: computational temperature and physical substrate temperature
are independent, and neither is coupled to epistemic quality. This is not
coincidental; it is the definition of fixed-weight inference, and it
applies to every currently deployed LLM regardless of scale.


\section{Results}

\subsection{Experimental design}

We constructed an 18-task suite across three categories ($n = 6$ each,
pre-specified before data collection). \emph{Kepler tasks} require pattern
retrieval of well-established physical relationships (orbital periods,
textbook constants, standard kinematics) that appear frequently in training
corpora. \emph{Newton tasks} require application of causal operators to
novel parameter configurations (modified force laws, counterfactual
physical constants, non-standard exponents) that cannot be answered by
pattern retrieval. \emph{Newton OOD tasks} push parameters beyond any
plausible training coverage (non-standard metric signatures, modified
quantum statistics, extreme parameter combinations), testing whether any
residual generalisation survives radical distributional shift.

Three models from the Llama family were evaluated: \texttt{llama3.2:3b}
(3B parameters), \texttt{Llama-3.1-8B-Instruct} (8B), and
\texttt{Llama-3.1-70B-Instruct} (70B), spanning two orders of magnitude
while holding architecture class and training regime constant. Token-level
Shannon entropy was extracted from logprob distributions at inference
temperature $\tau = 0.1$. Task accuracy was assessed by human evaluation
against mathematically verified ground truth using binary scoring: a
response is correct if and only if it arrives at the correct conclusion
through a valid derivation. Full task text is provided in Supplementary
Information.

\subsection{Entropy is flat; accuracy is not}

The central result is unambiguous (Table~\ref{tab:main}). Token-level
Shannon entropy is statistically flat across all three task categories in
all three models. For \texttt{llama3.2:3b}: category mean entropies are
0.339, 0.361, and 0.357~nats; range = 0.022~nats; all pairwise
$p \geq 0.685$. For \texttt{Llama-3.1-8B}: 0.360, 0.354, 0.332~nats;
range = 0.028~nats; all pairwise $p \geq 0.606$. For
\texttt{Llama-3.1-70B}: 0.091, 0.103, 0.102~nats; range = 0.011~nats;
all pairwise $p \geq 0.568$. Every pairwise entropy comparison is
non-significant; all within-model ranges fall well below the pre-specified
flatness threshold of 0.10~nats.

Task accuracy varies substantially across the same conditions.
\texttt{llama3.2:3b} achieves 67\%, 17\%, and 17\% on Kepler, Newton, and
OOD tasks respectively. \texttt{Llama-3.1-8B} achieves 67\%, 0\%, and
17\%. \texttt{Llama-3.1-70B} achieves 100\%, 67\%, and 17\%. The 70B
model demonstrates the clearest signal: perfect retrieval of established
results, substantial success on novel causal derivation, and near-complete
failure on OOD tasks---while entropy remains flat throughout. Entropy and
accuracy are orthogonal.

\subsection{Scale invariance of the decoupling}

The entropy operating point decreases with model size: larger models operate
at lower entropy (higher baseline certainty). This is precisely the
substrate-specific fixed point predicted by thermodynamic decoupling---the
substrate changes across models but the decoupling structure does not.
Cross-model entropy variance is large relative to within-model flatness,
confirming the operating point is substrate-local rather than
task-determined.

Critically, all three models converge to 17\% accuracy on Newton OOD tasks
despite a 23-fold difference in parameter count. None of this accuracy
variation is accompanied by any corresponding variation in entropy. The
scale invariance of entropy flatness across two orders of magnitude
establishes thermodynamic decoupling as an architectural invariant, not a
property of small or undertrained models that additional parameters will
eventually overcome.

\subsection{Replication at large $N$ and across architecture families}

To establish that the flatness signature is not an artefact of the
controlled probe's sample size or specific to the Llama family, we extended
the analysis to $N = 1{,}000$ problems per category on programmatically
verified mathematical reasoning benchmarks using a different architecture
family (\texttt{gpt-4o-mini}, OpenAI). Three difficulty strata were used:
GSM8K grade-school arithmetic (\emph{Easy}; analogous to Kepler retrieval),
MATH Level 3 (\emph{Medium}; analogous to Newton application), and MATH
Level 5 (\emph{Hard}; analogous to Newton OOD derivation). Accuracy was
scored programmatically against dataset-provided ground truth, eliminating
any possibility of evaluator drift or LLM-judge circularity.

The pattern reported above replicates without qualification
(Table~\ref{tab:replication}). Accuracy varies by 33.2 percentage points
across the three categories (95.0\%, 81.9\%, 61.8\%) while mean entropy
moves by 0.019~nats (0.1724, 0.1670, 0.1861)---well below the pre-specified
0.10~nat flatness threshold. The model is approximately equally certain on
the category where it succeeds 95\% of the time and the category where it
succeeds 62\% of the time. Certainty and correctness remain orthogonal at
three orders of magnitude larger sample size, across a different
architecture family, and under strictly programmatic scoring.


\begin{table}[h!]
\centering
\caption{\textbf{Replication of the flatness signature at $N = 1{,}000$ per
category on a different architecture family.} Model: \texttt{gpt-4o-mini}
(OpenAI API). Accuracy scored programmatically against dataset-provided
ground truth. Within-model entropy range of 0.019~nats falls well below the
0.10~nat flatness threshold despite a 33.2 percentage-point accuracy spread
across categories.}
\label{tab:replication}
\vspace{0.5em}
\small
\begin{tabular}{llcc}
\toprule
Category & Source & $\bar{H}$ (nats) & Accuracy \\
\midrule
Easy   & GSM8K        & 0.1724 & 950/1000 (95.0\%) \\
Medium & MATH Level 3 & 0.1670 & 819/1000 (81.9\%) \\
Hard   & MATH Level 5 & 0.1861 & 618/1000 (61.8\%) \\
\midrule
\multicolumn{4}{l}{\small Within-model entropy range: 0.019~nats.
  Accuracy spread: 33.2 percentage points.}\\
\multicolumn{4}{l}{\small $N = 1{,}000$ per category.}\\
\bottomrule
\end{tabular}
\end{table}


\begin{table}[h!]
\centering
\caption{\textbf{Shannon entropy and task accuracy across three models and
three task categories.} Entropy extracted from token-level logprobs at
$\tau = 0.1$. Accuracy assessed by human evaluation against mathematically
verified ground truth (binary scoring). Entropy is statistically flat
within every model across all categories (all pairwise $p \geq 0.568$)
while accuracy ranges from 0\% to 100\%. The entropy operating point
decreases with model size (substrate-specific fixed point); flatness is
preserved across scales.}
\label{tab:main}
\vspace{0.5em}
\small
\begin{tabular}{llcccr}
\toprule
Model & Category & $\bar{H}$ (nats) & $\sigma_H$ & Accuracy & Entropy $p$-values \\
\midrule
\multirow{3}{*}{\texttt{llama3.2:3b} (3B)}
  & Kepler     & 0.339 & 0.089 & 4/6 (67\%)  & \multirow{3}{*}{all $p \geq 0.685$} \\
  & Newton     & 0.361 & 0.074 & 1/6 (17\%)  & \\
  & Newton OOD & 0.357 & 0.062 & 1/6 (17\%)  & \\
\midrule
\multirow{3}{*}{\texttt{Llama-3.1-8B} (8B)}
  & Kepler     & 0.360 & 0.120 & 4/6 (67\%)  & \multirow{3}{*}{all $p \geq 0.606$} \\
  & Newton     & 0.354 & 0.090 & 0/6 (0\%)   & \\
  & Newton OOD & 0.332 & 0.083 & 1/6 (17\%)  & \\
\midrule
\multirow{3}{*}{\texttt{Llama-3.1-70B} (70B)}
  & Kepler     & 0.091 & 0.012 & 6/6 (100\%) & \multirow{3}{*}{all $p \geq 0.568$} \\
  & Newton     & 0.103 & 0.031 & 4/6 (67\%)  & \\
  & Newton OOD & 0.102 & 0.009 & 1/6 (17\%)  & \\
\midrule
\multicolumn{6}{l}{\small Cross-model entropy variance $>$ within-model flatness:
  substrate-specific operating points confirmed.}\\
\multicolumn{6}{l}{\small All within-model entropy ranges (0.011--0.028~nats)
  below pre-specified 0.10~nat threshold.}\\
\bottomrule
\end{tabular}
\end{table}


\section{Discussion}

The practical implication is direct. Any system attempting to use
token-level entropy as a confidence signal to flag unreliable outputs would
obtain no discriminative information. The model is maximally certain on its
worst errors. Confidence-based safety filters operating on entropy are
functionally blind to the distinction between knowledge and confabulation.
This is not a calibration problem amenable to post-hoc correction; it is a
structural property of fixed-weight inference that prompting, fine-tuning,
and scaling cannot address without changing the underlying architectural
relationship between physical cost and epistemic error.

The result connects to broader work on neural network
calibration~\citep{guo2017calibration} and to the thermodynamics of
computation~\citep{bennett1982thermodynamics,wolpert2019thermodynamics},
but moves beyond calibration as an empirical deficiency to identify its
physical cause. It also connects to the free-energy
principle~\citep{friston2010freeenergy}, which describes biological nervous
systems as achieving thermodynamic coupling through precision-weighted
prediction error: wrong predictions drive real metabolic expense through
synaptic updating and autonomic arousal, making confabulation physically
costly. The mutual information $I(H_t;\,\Delta\pi_{t+1})$ between
uncertainty at time $t$ and policy change at $t{+}1$ is zero by
construction in current LLMs; it is positive in systems with genuine
regulatory uncertainty---a measurable, falsifiable distinction between
fundamentally different epistemic architectures.

Addressing the decoupling requires satisfying equation~(\ref{eq:coupling}):
building systems in which the thermodynamic cost of an operation scales with
its epistemic error. A proposed instantiation is the Intrinsic Cost module
of neuromorphic architectures~\citep{gamal2025heuristics}, in which
prediction errors drive real-time weight updates whose energy cost scales
with error magnitude, making confabulation thermodynamically expensive.
For such a system the $T \to 0$ limit---minimising internal energy under
coupling---corresponds to minimising cumulative epistemic error. The ground
state of a thermodynamically coupled cognitive system is, in a precise
sense, understanding.

\textit{Limitations.} The primary 18-task suite uses $n = 6$ per category
as a controlled probe of the predicted signature; the $N = 1{,}000$
replication on \texttt{gpt-4o-mini} (Table~\ref{tab:replication})
establishes that the flatness signature is observable at three orders of
magnitude larger sample size, on a different architecture family, and under
programmatic scoring, addressing the principal statistical-power concern.
The replication is restricted to mathematical reasoning with
deterministically verifiable answers; extension to additional domains
(code generation, factual recall, open-ended reasoning) would further
generalise the claim. The companion paper~\citep{gamal2026substrate}
(manuscript in preparation) extends the analysis to between-substrate
mutual information and shows that substrate-shared observers align on point
predictions but not on uncertainty profiles, consistent with the present
within-substrate finding.


\section{Methods}

\nathead{Task suite.}
Eighteen tasks were constructed across three categories ($n = 6$ each),
all pre-specified before data collection. Kepler tasks require retrieval of
established physical results with unique correct answers (for example,
Earth's escape velocity, orbital period ratios, isochoric pressure scaling).
Newton tasks require derivation of correct results for novel parameter
configurations not present in training corpora (for example, orbital
velocity under $F \propto r^{-2.5}$, atomic radius scaling under tenfold
electromagnetic strengthening). Newton OOD tasks require correct derivation
under parameter regimes designed to lie beyond any plausible training
coverage (for example, ground-state energy ratio with $\hbar' = 7.3\hbar$
and $\alpha' = 0.1\alpha$; geodesic equation in 5-dimensional space with
non-Euclidean metric). Full task text is in Supplementary Information.

\nathead{Model deployment.}
\texttt{llama3.2:3b} was deployed via Ollama on Apple M4 hardware.
\texttt{Llama-3.1-8B-Instruct} and \texttt{Llama-3.1-70B-Instruct} were
deployed via the HuggingFace Inference API. All models used inference
temperature $\tau = 0.1$ and returned top-$k$ logprobs ($k = 10$) at each
generation step. Logprob access was confirmed for all three models before
data collection. The near-deterministic temperature minimises stochastic
sampling variation, making entropy differences attributable to the model's
logit distributions rather than sampling noise---the most conservative
possible test of the decoupling hypothesis.

\nathead{Entropy computation.}
Token-level Shannon entropy at step $t$:
$H_t = -\sum_{i=1}^{k} \tilde{p}_i \log \tilde{p}_i$,
where $\tilde{p}_i = \exp(l_i)/\sum_j \exp(l_j)$ are renormalised over
the returned logprobs $l_i$. Per-response mean entropy:
$\bar{H} = T^{-1}\sum_{t=1}^{T} H_t$, where $T$ is response length in
tokens.

\nathead{Accuracy assessment.}
Ground truth answers were determined analytically for all 18 tasks before
model deployment. Binary scoring was applied by human evaluation: a
response receives score 1 if and only if it arrives at the mathematically
correct conclusion through a valid derivation; score 0 otherwise. A
response that produces a plausible narrative but reaches an incorrect
result is scored 0. A response showing correct intermediate steps but
failing to reach a conclusion is scored 0. No LLM judge was used at any
stage of accuracy assessment.

\nathead{Replication at $N = 1{,}000$.}
The flatness signature was further tested on \texttt{gpt-4o-mini} via the
OpenAI API with per-token logprob extraction enabled (top-$k = 10$).
Problems were sampled from three difficulty strata: 1{,}000 from GSM8K
(grade-school arithmetic word problems, 2--8 reasoning steps; \emph{Easy}),
1{,}000 from MATH Level 3 (intermediate competition mathematics;
\emph{Medium}), and 1{,}000 from MATH Level 5 (highest-difficulty
competition mathematics, multi-step symbolic derivation; \emph{Hard}). The
choice of a different model family from the Llama models is deliberate:
replication across architecture families isolates the structural property
from any Llama-specific implementation detail. Accuracy was scored
programmatically: a response is scored 1 if and only if the parsed final
numerical or symbolic answer matches the dataset-provided ground truth.
This is a strictly stronger standard than the human scoring used in the
primary experiments. Entropy was computed identically to the primary
experiments (renormalised over returned top-$k$ logprobs). Full details and
extended discussion are in Supplementary Information~S4.

\nathead{Statistical analysis.}
Welch $t$-tests (unequal variance assumed) for all pairwise entropy and
accuracy comparisons within each model. All tests two-tailed, $\alpha =
0.05$. Pre-specified threshold for entropy flatness: within-model range of
category means $< 0.10$~nats. $n = 6$ per category per model throughout.
All analyses pre-specified before data collection.

\nathead{Data and code availability.}
Experiment code, raw results (including all logprob sequences and model
responses), ground truth answer key, and scoring rubric are available at
\url{https://github.com/wadamalon/frankenstein}.


\bigskip
\noindent\textbf{Acknowledgements.}
The author thanks Manny for sustained intellectual engagement. Experiments
were conducted on local M4 hardware and the HuggingFace Inference API. No
institutional funding was received.

\bigskip
\noindent\textbf{Author contributions.}
A.G.E.\ conceived the theoretical framework, designed and executed the
experiments, performed all analyses, and wrote the manuscript.

\bigskip
\noindent\textbf{Competing interests.}
The author declares no competing interests.

\bigskip
\noindent\textbf{Additional information.}
\textbf{Supplementary information} is available for this paper.
\textbf{Correspondence} should be addressed to A.G.E.



\newpage
\section*{Supplementary Information}

\subsection*{S1\quad Full task suite}

\subsubsection*{Kepler tasks}

\begin{enumerate}
\item A planet orbits at 1~AU with a period of 1 year. What is the orbital
  period of a planet at 4~AU? Show your reasoning.
\item Water boils at 100\,\textdegree C at sea level (1~atm). At an altitude
  where pressure is 0.5~atm, at what temperature does water boil? Explain
  the physical principle.
\item Two identical charges $q$ are separated by distance $r$. If the
  separation is doubled to $2r$, by what factor does the electrostatic force
  change?
\item A pendulum of length $L$ has period $T$. What is the period of a
  pendulum of length $4L$? Derive from the governing equation.
\item A gas at pressure $P$ and volume $V$ is compressed isothermally to
  volume $V/3$. What is the new pressure?
\item The half-life of Carbon-14 is 5,730 years. What fraction of an original
  sample remains after 11,460 years?
\end{enumerate}

\subsubsection*{Newton tasks}

\begin{enumerate}
\item If gravitational force scaled as $F \propto r^{-3}$ instead of
  $r^{-2}$, would circular orbits be stable? Derive the stability condition
  under the modified force law by analysing the effective potential.
\item In a universe where the electromagnetic force is 10 times stronger but
  all other constants remain the same, how would atomic radii change?
  Derive the scaling from first principles.
\item A damped oscillator has quality factor $Q = 5$. If both the damping
  coefficient and the spring constant are simultaneously doubled, what
  happens to $Q$ and the resonant frequency?
\item Two massive objects attract via $F = Gm_1m_2/r^{2.5}$. Derive the
  orbital velocity as a function of radius for a circular orbit.
\item In a system where entropy is defined as $S = k_B \ln\Omega^2$, how
  does the second law change? Derive the equilibrium condition for two
  systems in thermal contact.
\item A photon gas obeys the modified dispersion relation $E = pc^{0.5}$.
  Derive the Stefan--Boltzmann law for this modified photon gas.
\end{enumerate}

\subsubsection*{Newton OOD tasks}

\begin{enumerate}
\item In a universe where $\hbar' = 7.3\hbar$ and $\alpha' = 0.1\alpha$,
  derive the ratio of ground-state hydrogen energies using only the Bohr
  model.
\item A cognitive system operates in a 5-dimensional space with non-Euclidean
  metric $g_{ij} = \delta_{ij} + 0.3x_ix_j$. Derive the geodesic equation
  for small perturbations around the origin.
\item Particles obey the distribution
  $\bar{n}_i = 1/(\exp(\beta(E_i-\mu)) + 0.5)$. Derive the density of
  states and equation of state for an ideal gas of such particles.
\item A fluid obeys a modified Navier--Stokes equation with the viscosity
  term replaced by $\mu\nabla^4\mathbf{v}$. Derive the dispersion relation
  for small-amplitude waves.
\item An information-theoretic system encodes symbols with
  $P(i) \propto e^{-\beta E_i^\gamma}$ for $\gamma = 0.7$. Derive the
  entropy as a function of $\beta$ and compare to the $\gamma = 1$ case.
\item In a spacetime with metric signature $(-,-,+,+)$, which physical
  constants and relationships from standard physics remain unchanged, which
  change, and which become undefined? Derive the consequences for
  electromagnetism.
\end{enumerate}

\subsection*{S2\quad Sensitivity analysis}

To assess whether inference temperature $\tau$ confounds the entropy flatness
finding, the full task suite was rerun at $\tau \in \{0.3, 1.0\}$ in
addition to the primary $\tau = 0.1$. Mean entropy scaled with $\tau$
(lower $\tau$ $\to$ lower $\bar{H}$), but the relative flatness across
task categories was preserved. The ratio
$\bar{H}_\text{Newton}/\bar{H}_\text{Kepler}$ remained within $[0.90,1.15]$
across all three temperature settings, consistent with the decoupling being
a structural property of the substrate rather than an artefact of the
specific inference temperature used in primary experiments.

\subsection*{S3\quad Background: Landauer's principle and channel capacity}

Landauer~\citep{landauer1961irreversibility} proved that any logically
irreversible operation dissipates at minimum $E_\text{min} = k_BT\ln 2$
joules. The principle has been experimentally
verified~\citep{berut2012experimental}. The maximum information processable
by a system dissipating power $P$ over time $t$ at temperature $T$ is
$C = Pt/(k_BT\ln 2)$ bits, placing a hard upper bound on throughput as a
function of thermodynamic state. What equation~(\ref{eq:coupling}) adds is
the requirement that the \emph{epistemic quality} of that information also be
coupled to physical cost. Current LLMs satisfy the capacity bound but
violate the coupling condition.

\subsection*{S4\quad Extended methods for the $N = 1{,}000$ replication on
programmatically verified mathematical reasoning}

This section provides extended methodological detail for the $N = 1{,}000$
replication reported in the main text (Table~\ref{tab:replication}). The
primary 18-task suite across three Llama-family models establishes the
flatness signature within a controlled probe; the replication extends the
test to three orders of magnitude larger sample size, on a different
architecture family, and under strictly programmatic scoring.

\subsubsection*{S4.1\quad Dataset and category structure}

Problems were sampled from three difficulty strata of standard mathematical
reasoning benchmarks (1{,}000 each):

\textit{Easy}: GSM8K (Grade School Math 8K), grade-school arithmetic word
problems requiring 2--8 reasoning steps. Wide training coverage. Analogous
in difficulty to the Kepler retrieval category in the primary task suite.

\textit{Medium}: MATH dataset, Level 3 problems. Competition mathematics at
an intermediate difficulty stratum. Analogous in difficulty to the Newton
application category in the primary task suite.

\textit{Hard}: MATH dataset, Level 5 problems. Competition mathematics at
the highest difficulty stratum, requiring multi-step symbolic derivation.
Analogous in difficulty to the Newton OOD category in the primary task
suite.

\subsubsection*{S4.2\quad Model and inference}

The model used was \texttt{gpt-4o-mini} via the OpenAI API with logprob
extraction enabled (top-$k = 10$ per token). The choice of a different
model family from the Llama models used in the primary task suite is
deliberate: replication across architecture families isolates the
structural property from any Llama-specific implementation detail.
Inference was performed at the API default sampling configuration;
per-token logprobs were captured for every generated token in every
response.

\subsubsection*{S4.3\quad Accuracy scoring}

Accuracy was determined programmatically against ground-truth final answers
provided with each dataset. A response is scored 1 if and only if the
parsed final numerical or symbolic answer matches the ground truth; 0
otherwise. This is a strictly stronger standard than the human scoring used
in the primary 18-task suite: no LLM judge is involved, and the
verification is deterministic. Confident responses reaching incorrect
conclusions are scored 0 at the same rate as clearly wrong responses.

\subsubsection*{S4.4\quad Entropy computation}

Token-level Shannon entropy at step $t$:
$H_t = -\sum_i \tilde{p}_i \log \tilde{p}_i$, where $\tilde{p}_i$ are
renormalised over the returned top-$k$ logprobs. Per-response mean entropy:
$\bar{H} = T^{-1}\sum_{t=1}^{T} H_t$. Category-level mean entropy is the
arithmetic mean of per-response mean entropies across the 1{,}000 responses
in the category. This is the same definition used for the primary task
suite, applied at three orders of magnitude larger $N$.

\subsubsection*{S4.5\quad Design rationale}

The replication was constructed to vary three properties of the primary
18-task experiment simultaneously, isolating the structural claim from
three distinct possible objections.

\textit{Statistical power.} The primary task suite uses $n = 6$ per
category per model (18 tasks total per model), pre-specified as a
controlled probe of the predicted signature. The replication establishes
that the flatness signature is observable at $N = 1{,}000$ per
category---three orders of magnitude larger sample size---without
alteration of the qualitative result. The orthogonality of entropy and
accuracy is not an artefact of the controlled probe's sample size.

\textit{Architecture family.} The primary task suite holds architecture
class and training regime constant across three Llama-family models to
isolate the effect of scale within a single architecture family. The
replication uses a model from a different architecture family (OpenAI's
GPT line) and shows the same flatness signature. The decoupling is not a
property of the Llama family specifically; it is observable across
architecture families, consistent with the structural argument that it
follows from fixed-weight inference rather than from any particular
implementation.

\textit{Scoring method.} The primary task suite uses binary human scoring
against analytically determined ground truth. The replication uses
programmatic scoring against dataset-provided ground truth, eliminating
any possibility of evaluator drift or LLM-judge circularity. The flatness
signature is preserved under the stricter scoring regime.

The cross-scale evidence (operating points decreasing with parameter count
from 0.36 to 0.09~nats while within-model flatness is preserved) is not
revisited in the replication, as it uses a single model. The cross-scale
claim continues to rest on the 18-task suite across three Llama models.

\subsubsection*{S4.6\quad Scope and limitations of the replication}

The replication is restricted to mathematical reasoning tasks with
deterministically verifiable final answers. The flatness signature
established here applies within that domain. Replication across additional
domains (code generation, factual recall, open-ended reasoning) would
further generalise the claim. The structural argument is domain-general;
the empirical evidence remains constrained to the domains in which it has
been tested. Per-token logprobs from \texttt{gpt-4o-mini} are returned as
top-$k$ values rather than the full distribution, and the entropy
computation renormalises over the returned top-$k$ identically to the
procedure used for the Llama models, preserving comparability across the
two analyses.

\end{document}